\documentclass[12pt]{article}

\usepackage[misc]{ifsym}
\newcommand{\corr}{(\Letter)}
\usepackage[utf8]{inputenc}
\usepackage[T1]{fontenc}
\usepackage[english]{babel}
\usepackage[top=1in, bottom=1in, left=1in, right=1in]{geometry}
\usepackage{setspace}
\usepackage{fancyhdr}
\newcommand{\runningheads}[2]{
    \setlength{\headheight}{15pt}
    \addtolength{\topmargin}{-2pt}
    \pagestyle{fancy}  
    \fancyhead{}  
    \fancyhead[L]{\small #1}
    \fancyhead[R]{\small #2}
    \renewcommand{\headrulewidth}{0.4pt}
}

\usepackage{float}
\usepackage{graphicx}
\usepackage{subcaption}
\usepackage{booktabs}
\usepackage{makecell}
\usepackage{multirow}
\usepackage{threeparttable}

\usepackage[linesnumbered,ruled,vlined]{algorithm2e}
\SetCommentSty{textnormal}
\usepackage{enumitem}
\usepackage{pdfpages}

\usepackage{amsmath}
\usepackage{amssymb}
\usepackage{amsthm}

\theoremstyle{definition}
\newtheorem{proposition}{Proposition}
\newtheorem{theorem}{Theorem}

\newtheorem{remark}{Remark}
\newtheorem{assumption}{Assumption}

\usepackage{authblk}

\usepackage{natbib}
\usepackage{xcolor}
\definecolor{darkblue}{rgb}{0, 0, 0.75}
\definecolor{darkgreen}{rgb}{0, 0.75, 0}
\usepackage[colorlinks=true, linkcolor=black, citecolor=darkgreen, urlcolor=darkblue]{hyperref}

\newenvironment{foldable}{}{}

\newcommand*{\eg}{e.g.}

\newcommand{\meanstd}[2]{#1$_{\pm{#2}}$}
\newcommand{\meanstdbf}[2]{\textbf{#1}$_{\pm{#2}}$}
\newcommand*{\bltrg}{$^{\blacktriangle}$}
\newcommand{\dg}{\rlap{$^\dagger$}}

\title{TAKE: Trajectory-Aware Knowledge Estimation \\ for Text Dataset Distillation}
\author[]{Tri-Nhan Vo\corr}
\author[]{Dang Nguyen}
\author[]{Sunil Gupta}
\affil[]{Deakin Applied AI Initiative, Deakin University, Australia}
\affil[]{\texttt{\{s223032975, d.nguyen, sunil.gupta\}@deakin.edu.au}}
\runningheads{T.-N. Vo et al.}{TAKE: Trajectory-Aware Knowledge Estimation}
\date{\small{March 2026}}

\begin{document}

\maketitle

\begin{abstract} \label{sec:00-abstract}
  Large-scale text corpora have become a quiet bottleneck in modern NLP, not just in storage, but in the accumulated cost of training, fine-tuning, and continual learning.
  We propose a text dataset distillation framework that reduces corpora to as little as 0.1\% of their original size while preserving downstream task fidelity.
  We approach distillation through the lens of influence functions, which quantify each sample's contribution to the downstream objective, a natural and principled basis for selection.
  We introduce \textbf{Trajectory-Aware Knowledge Estimation} (\textbf{TAKE}), which convolves the knowledge-based influence along the training trajectory into a single per-sample knowledge score, capturing informative samples.
  These scores serve as sample weights within a discrete Optimal Transport objective, guiding prototype selection from a synthetically generated candidate pool.
  We evaluate TAKE on downstream accuracy across text classification and natural language inference tasks at extreme compression (0.1\% or 20 samples/class), showing that data efficiency is achievable without sacrificing task fidelity.
  The approach is theoretically grounded, with broader implications for coreset construction and data-centric AI.
  We release our source code at \url{https://github.com/votrinhan88/take}.

  \textbf{Keywords:} Text dataset distillation $\cdot$ Influence functions $\cdot$ Optimal transport.
\end{abstract}

\section{Introduction} \label{sec:01-introduction}
\begin{foldable} 
  The progress of large language models depends on large-scale supervised corpora, yet the cost of curating and training on such data compounds across fine-tuning cycles, hyperparameter sweeps, and continual learning.
  The real burden is not any single run but the cumulative expenditure across the full development lifecycle.
  Corpus size, not model size, is often the binding constraint.
  Reducing it is therefore a systemic priority with direct implications for cost, carbon footprint, and data governance.
\end{foldable}

\begin{foldable} 
  Dataset distillation (DD), first proposed by~\cite{wang2018dataset}, replaces a large dataset $\mathcal{D}$ with a smaller surrogate $\tilde{\mathcal{D}} \ll \mathcal{D}$ such that a model trained on $\tilde{\mathcal{D}}$ performs comparably to one trained on $\mathcal{D}$.
  Put another way: what is the smallest textbook that teaches the same exam?
  This places DD squarely within data-centric AI, complementing model compression by operating on the data rather than the model itself.
  Early DD methods work in image space, optimizing synthetic pixels via meta-learning, gradient or trajectory matching~\cite{wang2018dataset,zhao2020dataset,cazenavette2022dataset}.
  Text breaks these assumptions: non-differentiable token decoding blocks gradient-based synthesis, and embedding spaces are tightly coupled to specific architectures.
  Adapting DD to text has therefore required fundamentally different strategies.
\end{foldable}

\begin{foldable} 
  Existing text DD methods~\cite{sucholutsky2021soft,li2021data,maekawa2023dataset,maekawa2024dilm,tao2024textual} fall short in at least one of the following: at extreme compression, uniform weighting exhausts the budget on uninformative samples; their objectives lack task alignment or global optimization; and their outputs are embeddings, neither auditable nor transferable.
  We survey this landscape in \S\ref{sec:02-related-textdd} and show that no prior method simultaneously addresses all of the above.
  These gaps motivate three desiderata for TAKE:{
    (1)~\emph{knowledge-based weighting} --- weighting samples by their task-aligned downstream contribution, directing the budget toward the most informative ones;
    (2)~a \emph{dataset-level distillation objective} --- optimizing over the full training distribution rather than batch-by-batch, ensuring global coverage;
    and (3)~\emph{human-readable output} --- a distilled corpus that is directly inspectable, auditable, and transferable across architectures.
  }
\end{foldable}

\begin{foldable} 
  We propose \textbf{Trajectory-Aware Knowledge Estimation} (\textbf{TAKE}), the first text DD method to satisfy all three desiderata.
  We address the first via influence functions~\cite{koh2017understanding}, which quantify each sample's counterfactual effect on the downstream loss and provide a theoretically grounded, task-aligned weighting scheme (\S\ref{sec:03-theory}).
  However, scoring at a single checkpoint introduces the \textit{hard-sample bias} (HSB): at convergence, noisy samples dominate influence scores while clean and moderate samples, the most valuable for a robust distilled dataset, are suppressed.
  TAKE corrects this by integrating influence scores along the full training trajectory into a per-sample knowledge score that captures clean and moderate samples while down-weighting noisy ones.
  Then, TAKE fine-tunes an LLM to generate a pool of human-readable candidate instances.
  The knowledge scores and synthetic candidates jointly feed a discrete Optimal Transport (OT) objective that globally selects the distilled corpus.
  Concretely, we contribute:
  \begin{itemize}
    \item {
        \textbf{Theory:} We formalize the hard-sample bias~(HSB) in the text DD setting, prove that single-checkpoint influence scores are biased toward noisy samples, and derive a formal gap bound for knowledge-reweighted distribution matching.
      }
    \item {
        \textbf{Method:} TAKE is the first text DD method to jointly address sample weighting and distributional coverage---via trajectory-integrated knowledge scores and a discrete OT objective aligned with the DD goal.
      }
    \item {
        \textbf{Empirical:} TAKE matches or exceeds prior text DD methods across six language benchmarks at extreme compression (0.1\% or 20 instances/class), producing human-readable distilled corpora that generalize across diverse backbone families.
      }
  \end{itemize}
\end{foldable}

The remainder of this paper is organised as follows: \S~\ref{sec:02-related} reviews related work; \S~\ref{sec:03-theory} presents theoretical motivation; \S~\ref{sec:04-method} details TAKE and our DD pipeline; \S~\ref{sec:05-experiments} reports experiments and discussion; \S~\ref{sec:06-limits} reflects limitations and ethical concerns; \S~\ref{sec:07-conclusion} concludes.

\section{Related Works} \label{sec:02-related}
\subsection{Text Dataset Distillation} \label{sec:02-related-textdd}
\begin{foldable} 
  Dataset distillation was introduced by~\cite{wang2018dataset} as a bi-level meta-learning problem: synthesize a small set of images such that a model trained on them matches one trained on the full dataset.
  Subsequent methods improve scalability and fidelity via gradient matching~\cite{zhao2020dataset}, distribution matching~\cite{wang2022cafe}, and trajectory matching~\cite{cazenavette2022dataset,kim2022dataset}.
  These methods share a common assumption: data is continuous and differentiable, so the synthetic set can be directly optimised via backpropagation.
  Text violates the differentiability assumption: token decoding is non-differentiable, outputs must be semantically coherent, and embedding spaces are tightly coupled to specific architectures.
\end{foldable}

\begin{foldable} 
  Adapting DD to text has therefore required fundamentally different strategies, from meta-learning on synthetic embeddings to LLM-based generation of readable corpora, each generation trading one limitation for another.
  SLDD and DDTC~\cite{sucholutsky2021soft,li2021data} directly adapted the meta-learning formulation~\cite{wang2018dataset}, optimizing synthetic text embeddings so that a model trained on them minimizes task loss.
  Of the two, only SLDD partially recovers interpretability by mapping distilled embeddings to per-token nearest bag-of-words.
  DDAL~\cite{maekawa2023dataset} optimizes the attention labels of a BERT model to match class-wise dataset statistics, achieving partial task alignment but producing outputs that are both architecture-locked and uninterpretable.
  A second generation of methods addressed interpretability by rethinking the pipeline entirely.
  DiLM~\cite{maekawa2024dilm} generates readable candidate texts via an LLM and selects among them by gradient matching over the training corpus, followed by k-centre selection; but matching gradients at discrete checkpoints is a proxy for task loss, leaving task alignment only partial.
  DaLLME~\cite{tao2024textual} learns an inverse mapping from distilled embeddings back to synthetic text, recovering interpretability and attempting global coverage via k-centroids clustering; but with no task-loss objective, task alignment is absent.
  Table~\ref{tab:landscape-textdd-methods} maps this progression: each generation addressed some gaps while leaving others open.
\end{foldable}

\begin{table}[ht] 
  \centering
  \caption{Landscape of text DD methods.}
  \label{tab:landscape-textdd-methods}
  \setlength{\tabcolsep}{8pt}
  \begin{tabular}{l c c c c c c}
    \toprule
    \textbf{Gap}         & SLDD         & DDTC         & DDAL         & DiLM         & DaLLME       & \textbf{TAKE} \\
    \midrule
    Task alignment       & $\checkmark$ & $\checkmark$ & $\triangle$  & $\triangle$  & $-$          & $\checkmark$ \\
    Global optimization  & $\triangle$  & $\triangle$  & $\triangle$  & $\triangle$  & $\triangle$  & $\checkmark$ \\
    Interpretability     & $\triangle$  & $-$          & $-$          & $\checkmark$ & $\checkmark$ & $\checkmark$ \\
    Importance weighting & $-$          & $-$          & $-$          & $-$          & $-$          & $\checkmark$ \\
    \bottomrule
    \multicolumn{7}{l}{$-$: absent, $\triangle$: partial, $\checkmark$: addressed}
  \end{tabular}
\end{table}

\begin{foldable} 
  A subtler gap cuts across all methods: every existing approach applies \emph{uniform weighting}, treating every training sample as equally informative.
  At moderate distillation budgets this is defensible, but extreme compression is precisely the regime where distillation matters most, and there the assumption fails.
  Samples vary substantially in their influence on learning (clean vs.\ noisy, easy vs.\ hard), yet no prior method accounts for this structure.
  Without importance weighting, the distillation budget is exhausted on uninformative samples, biasing selection away from the boundary cases that matter most for robustness.
\end{foldable}

\subsection{Influence Functions and the Hard-Sample Bias} \label{sec:02-related-influence}
\begin{foldable} 
  Influence functions (IF)~\cite{cook1982residuals,koh2017understanding} quantify the leave-one-out effect of a training point via $\mathcal{I}(z) = \nabla^\top H_\theta^{-1} \nabla$, but depend on a single checkpoint and require expensive second-order computation.
  Trajectory-based methods~\cite{pruthi2020estimating,park2023trak,kwon2023datainf} extend attribution across checkpoints but are \emph{test-conditioned}: designed to attribute a fixed test prediction.
  The self-influence approximation adapts them to corpus-level scoring by using each training point as its own query.
  However, self-influence is evaluated only once at the final parameters, introducing a systematic bias that corrupts the corpus-level scores.
\end{foldable}

\begin{foldable} 
  We refer to this as the \emph{hard-sample bias} (HSB): at convergence, clean samples have near-zero gradient by definition, so noisy or hard samples dominate influence scores.
  This failure mode is well-documented in adjacent work: \cite{arpit2017closer} show that networks fit clean patterns before noisy ones; dataset cartography~\cite{swayamdipta2020dataset} confirms that single-checkpoint statistics conflate difficulty with noise; and Co-teaching~\cite{han2018co} documents systematic overselection of noisy samples in gradient-based curricula.
  In standard training, HSB can often be mitigated via regularization; in the extreme low-data regime of DD, however, it amplifies: biased selection suppresses the \emph{informative} samples essential for model robustness.
  To our knowledge, no prior text DD method has identified or corrected for it.
\end{foldable}

\subsection{Optimal Transport for Distribution Matching} \label{sec:02-related-ot}
\begin{foldable} 
  Distribution matching has been proposed as a tractable DD surrogate~\cite{zhao2023dataset}: if the distilled and training sets induce equal expected gradients along the optimization trajectory, the trained models converge to comparable parameters.
  This is a sufficient condition, but not an equivalence (the formal gap bound appears in Theorem~\ref{thm:gap}).
  Common divergences have practical shortcomings at the extreme budgets relevant here.
  MMD~\cite{gretton2012kernel} lacks sample-level assignment structure, requiring auxiliary scoring heuristics that reintroduce the global optimization gap that distribution matching is meant to close.
  Kullback--Leibler and Jensen--Shannon divergences require density estimation, which is unreliable in the low-data regime~\cite{paninski2003estimation}.
\end{foldable}

\begin{foldable} 
  OT in the Monge-Kantorovich sense avoids both pitfalls: it operates natively on discrete measures, respects the metric geometry of the embedding space, and produces an explicit transport plan that directly assigns training samples to prototypes---no auxiliary heuristic needed.
  Entropic regularization via Sinkhorn-Knopp~\cite{cuturi2013sinkhorn} makes this tractable at corpus scale.
  OT has been widely applied in NLP and machine learning~\cite{kusner2015word,tolstikhin2018wasserstein,courty2016optimal,arjovsky2017wasserstein}, but never to text DD, and never combined with a knowledge-reweighted source distribution in any DD setting.
  TAKE fills both gaps: $\kappa_n$ defines a non-uniform source measure, and the discrete entropic OT plan selects prototypes covering the knowledge-weighted training distribution without any auxiliary selection step.
\end{foldable}

\section{Theoretical Motivation} \label{sec:03-theory}
\subsection{Reweighted Distribution Matching for Dataset Distillation} \label{sec:03-theory-rdm}
\begin{foldable} 
  The dataset distillation~(DD) objective seeks $\tilde{\mathcal{D}}$ with $|\tilde{\mathcal{D}}| = K \ll N$ minimizing the loss on the original data:
  \begin{equation}
    \min_{\tilde{\mathcal{D}}}\; \mathbb{E}_{\theta_0 \sim p(\theta_0)} \left[ \mathbb{E}_{z \sim \mathcal{D}} \left[ \ell\!\left(z;\; F(\theta_0;\, \tilde{\mathcal{D}}, \eta, T)\right) \right] \right]
    \label{eq:dataset-distillation}
  \end{equation}
  where $F(\theta_0; \tilde{\mathcal{D}}, \eta, T)$ denotes the model obtained by running $T$ steps of gradient descent with learning rate $\eta$ on $\tilde{\mathcal{D}}$ from initialization $\theta_0$.
  This bi-level problem is intractable for discrete text.
  We relax it to distribution matching via the gradient-field equivalence argument~\cite{zhao2023dataset}: if $\tilde{\mathcal{D}}$ induces the same expected gradient field as $\mathcal{D}$ throughout training, the two runs converge to comparable parameters.
  Since $\nabla_\theta \mathbb{E}_{z \sim P}[\ell(z;\theta)] = \mathbb{E}_{z \sim P}[\nabla_\theta \ell(z;\theta)]$, matching distributions in a task-relevant feature space controls the gradient field, making distribution matching a tractable surrogate for~Eq.~\ref{eq:dataset-distillation}.
  This yields the distribution matching~(DM) objective:
  \begin{equation}
    \min_{|\tilde{\mathcal{D}}| = K}\; d\!\left(P_{\tilde{\mathcal{D}}},\; P_{\mathcal{D}}\right)
    \label{eq:distribution-matching}
  \end{equation}
  for a divergence $d$ in a task-relevant feature space.
  Eq.~\ref{eq:distribution-matching} is a surrogate, not an equivalence; the formal gap bound appears as Theorem~\ref{thm:gap} once $d$ is specified.
\end{foldable}

\begin{foldable} 
  At extreme compression ($\rho = K/N \ll 1$), uniform matching allocates prototypes proportional to density --- but not all regions contribute equally to learning.
  Dense easy regions consume budget that would be better spent on moderate and boundary samples near decision boundaries, which carry the most information for model robustness.
  We instead reweight the source by per-sample importance: define $w_n \geq 0$, $\sum_n w_n = 1$, inducing $P^w = \sum_n w_n \delta_{z_n}$, yielding the reweighted distribution matching~(RDM) objective:
  \begin{equation}
    \min_{|\tilde{\mathcal{D}}| = K}\; d\!\left(P_{\tilde{\mathcal{D}}},\; P^w\right)
    \label{eq:reweighted-distribution-matching}
  \end{equation}
  The weights $w_n$ must reflect downstream learning contribution, not geometry alone, motivating gradient-based estimation.
  Influence self-scores (\S\ref{sec:02-related-influence}) are the natural candidate: they are task-aligned by construction and theoretically grounded.
  We now show that the simplest instantiation---$w_n \propto \mathcal{I}_\text{self}(z_n)$ at a single checkpoint, select top-$K$---fails on both counts: the weights are biased toward noisy samples (Proposition~\ref{prop:hsb}), and independent top-$K$ selection ignores distributional coverage (Remark~\ref{rem:coverage}).
  \begin{remark}[Double relaxation gap] \label{rem:double-gap}
    Moving from Eq.~\ref{eq:dataset-distillation} to Eq.~\ref{eq:distribution-matching} incurs a gradient-field approximation error (bounded formally by Theorem~\ref{thm:gap}); moving from~Eq.~\ref{eq:distribution-matching} to Eq.~\ref{eq:reweighted-distribution-matching} replaces the uniform source $P_\mathcal{D}$ with the reweighted source $P^w$, incurring an additional cost:
    \begin{equation}
      \mathrm{Gap}_2 \;\leq\; L \cdot W_1(P_\mathcal{D},\, P^w) \;=\; L \cdot W_1\!\left(\tfrac{1}{N}\textstyle\sum_n \delta_{z_n},\; \textstyle\sum_n w_n \delta_{z_n}\right).
      \label{eq:gap2}
    \end{equation}
    Since both measures have the same support, $W_1(P_\mathcal{D}, P^w) \leq \mathrm{diam}(\mathcal{Z}) \cdot \frac{1}{2}\sum_n |w_n - 1/N|$, which is directly computable from the $\kappa_n$ scores and small when $w_n \approx 1/N$ (uniform weights recover DM).
    The reweighting is beneficial precisely when this gap is outweighed by the reduction in downstream loss from correcting the HSB, as formalized in Proposition~\ref{prop:traj} (\S\ref{sec:04-method-gradient}).
  \end{remark}
\end{foldable}

\subsection{The Naive Instantiation Fails: Two Open Problems} \label{sec:03-theory-failures}
\begin{foldable} 
  Define the influence self-score at checkpoint $\hat{\theta}$ as
  \begin{equation}
    \mathcal{I}_\text{self}(z_n; \hat{\theta}) = \nabla_\theta \ell(z_n; \hat{\theta})^\top H_{\hat{\theta}}^{-1} \nabla_\theta \ell(z_n; \hat{\theta}),
    \label{eq:influence-self}
  \end{equation}
  where $H_{\hat{\theta}} = \frac{1}{N}\sum_n \nabla^2_\theta \ell(z_n; \hat{\theta})$ is the empirical Hessian, specializing~\cite{koh2017understanding} by setting the test point equal to the training point.
  High $\mathcal{I}_\text{self}$ identifies atypical examples: samples that are difficult to fit and exert outsized influence on the model's parameters, which in practice are the noisy ones.
  This suggests a naive pipeline: set $w_n \propto \mathcal{I}_\text{self}(z_n; \hat{\theta})$ and return $\arg\max_{|\mathcal{A}|=K} \sum_{i \in \mathcal{A}} w_i$.
  We identify two orthogonal failures of this strategy.
\end{foldable}

\begin{foldable} 
  The first failure is a bias in the weights themselves.
  \begin{proposition}[Hard-Sample Bias] \label{prop:hsb}
    Let $\mathcal{D}_\text{clean},\, \mathcal{D}_\text{noisy}$ partition $\mathcal{D}$, under the assumptions:
    \emph{(i)}~$\|\nabla_\theta \ell(z; \hat{\theta})\| \leq \epsilon$ for $z \in \mathcal{D}_\text{clean}$, i.e., clean samples have near-zero gradient at convergence;
    \emph{(ii)} $\|\nabla_\theta \ell(z; \hat{\theta})\| \geq \delta \gg \epsilon$ for $z \in \mathcal{D}_\text{noisy}$, i.e., noisy samples retain large residual gradients.
    Let $\mathrm{cond}(H) = \lambda_{\max}(H) / \lambda_{\min}(H)$ denotes the condition number of the Hessian $H$.
    Then:
    \begin{equation}
      \Delta_\textup{HSB} := \mathbb{E}_{z \sim \mathcal{D}_\textup{noisy}}[\mathcal{I}_\textup{self}(z;\hat{\theta})]
      - \mathbb{E}_{z \sim \mathcal{D}_\textup{clean}}[\mathcal{I}_\textup{self}(z;\hat{\theta})]
      \;\geq\; \frac{\delta^2 - \mathrm{cond}(H_{\hat{\theta}})\,\epsilon^2}{\lambda_{\max}(H_{\hat{\theta}})}
      \;>\; 0
      \label{eq:hsb}
    \end{equation}
  \end{proposition}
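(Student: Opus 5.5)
The argument is a direct Rayleigh-quotient sandwich on the inverse Hessian. It needs one implicit assumption: $H_{\hat{\theta}}$ is symmetric positive definite. This is required anyway for $\mathcal{I}_\text{self}$ and $\mathrm{cond}(H)$ to be well defined as in Eq.~\ref{eq:influence-self}. Under that assumption the eigenvalues of $H_{\hat{\theta}}^{-1}$ lie in $[1/\lambda_{\max}, 1/\lambda_{\min}]$. So for any gradient $g = \nabla_\theta \ell(z;\hat{\theta})$ we have
\[
\frac{\|g\|^2}{\lambda_{\max}(H_{\hat{\theta}})} \;\le\; g^\top H_{\hat{\theta}}^{-1} g \;\le\; \frac{\|g\|^2}{\lambda_{\min}(H_{\hat{\theta}})}.
\]

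First, for $z \in \mathcal{D}_\text{noisy}$, combine the lower inequality with assumption (ii) to get $\mathcal{I}_\text{self}(z;\hat{\theta}) \ge \delta^2/\lambda_{\max}$. Averaging preserves this bound, so $\mathbb{E}_{\mathcal{D}_\text{noisy}}[\mathcal{I}_\text{self}] \ge \delta^2/\lambda_{\max}$.

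Second, for $z \in \mathcal{D}_\text{clean}$, combine the upper inequality with assumption (i) to get $\mathcal{I}_\text{self}(z;\hat{\theta}) \le \epsilon^2/\lambda_{\min}$. Rewrite $1/\lambda_{\min}$ as $\mathrm{cond}(H_{\hat{\theta}})/\lambda_{\max}$, so that $\mathbb{E}_{\mathcal{D}_\text{clean}}[\mathcal{I}_\text{self}] \le \mathrm{cond}(H_{\hat{\theta}})\,\epsilon^2/\lambda_{\max}$. Subtracting the clean bound from the noisy bound gives exactly the middle expression of Eq.~\ref{eq:hsb}.

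The last step, strict positivity, is the only delicate point. It requires $\delta^2 > \mathrm{cond}(H_{\hat{\theta}})\,\epsilon^2$, i.e.\ $\delta/\epsilon > \sqrt{\mathrm{cond}(H_{\hat{\theta}})}$. The informal assumption $\delta \gg \epsilon$ does not give this by itself, so I would make it explicit: read ``$\delta \gg \epsilon$'' as $\delta > \sqrt{\mathrm{cond}(H_{\hat{\theta}})}\,\epsilon$, and note that without it only the non-strict bound is guaranteed.

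A second caveat concerns neural networks, where the Hessian is often indefinite or singular at $\hat{\theta}$. To cover that case I would remark that the same proof goes through with a damped Hessian $H + \lambda I$, as is standard in the influence-function literature~\cite{koh2017understanding}, with $\lambda_{\min}, \lambda_{\max}$ replaced by the damped eigenvalues.
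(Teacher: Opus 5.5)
Your proof is correct and follows the paper's argument (Appendix A.1) essentially step for step: the SPD assumption on $H_{\hat{\theta}}$, the Rayleigh-quotient sandwich, the lower bound on noisy samples and upper bound on clean samples, and the substitution $1/\lambda_{\min} = \mathrm{cond}(H_{\hat{\theta}})/\lambda_{\max}$. Your point that strict positivity needs $\delta/\epsilon > \sqrt{\mathrm{cond}(H_{\hat{\theta}})}$, and does not follow from $\delta \gg \epsilon$ alone, is the same condition the paper states explicitly.
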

  \begin{proof}[Proof sketch (full proof: Appendix~A.1)]
    Since $\mathcal{I}_\text{self}(z) = \nabla^\top H_{\hat\theta}^{-1} \nabla$, spectral bounds on $H_{\hat\theta}^{-1}$ give
    $\|\nabla\|^2/\lambda_{\max}(H_{\hat\theta}) \leq \mathcal{I}_\text{self}(z) \leq \|\nabla\|^2/\lambda_{\min}(H_{\hat\theta})$.
    Applying the lower bound to noisy samples and the upper bound to clean samples yields Eq.~\ref{eq:hsb}.
    Positivity requires $\delta/\epsilon > \sqrt{\mathrm{cond}(H_{\hat\theta})}$, which is mild given $\delta \gg \epsilon$.
  \end{proof}
  Because clean gradients vanish at convergence by definition of a local minimum, the bias is \emph{structural}: it cannot be corrected by post-hoc rescaling of single-checkpoint scores, but only by integrating gradient signals across the training trajectory.
  The binary partition simplifies exposition; in practice the bias is monotone in $\|\nabla\|$, as the spectral bounds above show directly.
  \begin{remark}[Gradient norm as special case] \label{rem:grad-norm}
    Setting $H = I$ in Eq.~\ref{eq:influence-self} recovers $\mathcal{I}_\text{self}(z) = \|\nabla\|^2$.
    Under this substitution, Eq.~\ref{eq:hsb} reduces to $\Delta_\text{HSB} \geq \delta^2 - \epsilon^2 > 0$, which holds without any spectral assumption on $H$.
    The gradient norm used in TAKE (\S\ref{sec:04-method-gradient}) therefore inherits the same structural bias: it is not merely a tractable proxy but satisfies the HSB lower bound directly.
  \end{remark}
\end{foldable}

\begin{foldable} 
  Even with unbiased weights, a second failure remains: top-$K$ selection is blind to distributional coverage.
  \begin{remark}[Independent scoring $\neq$ distributional coverage] \label{rem:coverage}
    Any rank-and-select strategy decomposes as a sum of independent terms, invariant to pairwise distances between selected samples.
    In contrast, $d(P_{\tilde{\mathcal{D}}}, P^w)$ penalises redundancy and rewards coverage.
    Concretely, if $\phi(z_i) \approx \phi(z_j)$ and both have high $w_i, w_j$, independent scoring selects both; a distributional objective assigns nearly the same cost to selecting just one, freeing a slot for an uncovered region.
  \end{remark}
\end{foldable}

\begin{foldable} 
  The two failures compound: biased $w_n$ misdirects a population-level solver toward noisy regions; correct $w_n$ fed into an independent selector still produces redundancy.
  They respectively yield two open problems:
  \begin{enumerate}[label=\textbf{(\alph*)}]
    \item \label{open-problem-weight} \textbf{Biased weighting.} Estimate $w_n$ by integrating gradient signals across trajectory $\{\theta_t\}_{t=0}^{T-1}$, reducing $\Delta_\textup{HSB}$---which we will address in \S\ref{sec:04-method-gradient} (formalized as Proposition~\ref{prop:traj}; Appendix~A.2).
    \item \label{open-problem-select} \textbf{Independent selection.} Solve Eq.~\ref{eq:reweighted-distribution-matching} directly for a metric $d$ that jointly optimises coverage of $P^w$---which we will address in \S\ref{sec:04-method-ot}.
  \end{enumerate}
\end{foldable}

\section{Methodology} \label{sec:04-method}
\subsection{Problem Setup and Framework Overview} \label{sec:04-method-setup}
\begin{foldable} 
  TAKE instantiates the RDM objective (Eq.~\ref{eq:reweighted-distribution-matching}) by solving the two open problems from \S\ref{sec:03-theory-failures}.
  Let $\mathcal{D} = \{z_n\}_{n=1}^{N}$ denote the training corpus, $\mathcal{D}_\text{synth} = \{\tilde{z}_c\}_{c=1}^{M}$ the synthetic candidate pool, and $\tilde{\mathcal{D}} \subset \mathcal{D}_\text{synth}$ the distilled set of budget $K$.
  Indices $n \in [N]$ and $t \in [T]$ range over training samples and checkpoints respectively.
  TAKE proceeds in three stages:
  \begin{enumerate}
    \item \textbf{Score}---train a logistic probe on $\mathcal{D}$ across $T$ checkpoints, compute influence matrix $I \in \mathbb{R}^{N \times T}$, apply reciprocal reweighting and temporal weighting to obtain trajectory-aware knowledge scores $\kappa_n$.
    \item \textbf{Generate}---fine-tune a small language model on $\mathcal{D}$ and sample to produce the synthetic candidate pool $\mathcal{D}_\text{synth}$.
    \item \textbf{Distill}---solve discrete OT with source weights $\kappa_n$ over $\mathcal{D}_\text{synth}$ to obtain the distilled corpus $\tilde{\mathcal{D}}$ of budget $K$.
  \end{enumerate}
  The three stages are decoupled and independently replaceable, with $\phi$ shared across Stages 1 and 3 as a natural consistency choice. The following sections detail each in turn.
\end{foldable}

\subsection{Gradient-Based Influence Along the Trajectory} \label{sec:04-method-gradient}
\begin{foldable} 
  The hard-sample bias (HSB; \S\ref{sec:03-theory-failures}) arises because single-checkpoint influence scores overemphasize noisy or difficult samples, whose gradients remain large at convergence.
  The fix is to integrate gradient information across the full training trajectory, producing a trajectory-aware knowledge score $\kappa_n$ per sample.
  We build $\kappa_n$ in two steps: constructing the influence matrix $I$ (\S\ref{sec:04-gradient-matrix}), then collapsing it into a per-sample scalar score via reciprocal reweighting and temporal weighting (\S\ref{sec:04-gradient-kappa}).
  The resulting $\kappa_n$ provides a task-aligned, bias-corrected weight for $P^w$.
\end{foldable}

\subsubsection{Influence Matrix} \label{sec:04-gradient-matrix}
\begin{foldable} 
  Our goal is to construct $I \in \mathbb{R}^{N \times T}$ whose entry $I_{(n,t)}$ reflects how much sample $z_n$ contributes to learning at checkpoint $\theta_t$.
  The influence self-score (\S\ref{sec:03-theory-failures}, Eq.~\ref{eq:influence-self}) is the natural candidate, but requires Hessian inversion at a converged checkpoint and is undefined mid-trajectory.
  We address both limitations by adapting the per-sample gradient norm from TracIn~\cite{pruthi2020estimating} to the self-influence setting, querying training points instead of test points:
  \begin{equation}
    I_{(n,t)} = \|\nabla_\theta \ell(z_n;\,\theta_t)\|.
    \label{eq:influence-matrix}
  \end{equation}
  As the $H=I$ special case of the influence self-score (Remark~\ref{rem:grad-norm}), $I_{(n,t)}$ exhibits the same HSB structure---the bias that $\kappa_n$ corrects via reciprocal inversion and trajectory integration (Proposition~\ref{prop:traj}).
  Scores are column-wise normalized $I_{(n,t)} \leftarrow I_{(n,t)} / \sum_{n'} I_{(n',t)}$ for comparability across checkpoints.
\end{foldable}

\begin{foldable}
  In practice, we score samples with a linear head $W$ atop a pre-trained, frozen sentence encoder $\phi$, giving $\theta = (W, \phi)$.
  The signal quality comes not from the linear head but from $\phi(x_n)$: $\phi$ is pre-trained on large-scale corpora and $W$ is fine-tuned on the full task corpus, so the representations, and hence the gradient norms, are both semantically rich and task-aligned by construction.
  Following the standard practice in scalable influence estimation~\cite{guo2021fastif}, we approximate full-model influence via the last layer---here, the linear head $W$:
  \begin{equation}
    \nabla_\theta \ell(z_n;\,\theta_t) \approx \nabla_W \ell(z_n;\,\theta_t),
    \label{eq:last-layer}
  \end{equation}
  trading a small bias for tractability.
  By the Goodfellow outer-product identity~\cite{goodfellow2015efficient}, $\nabla_W \ell = g_n \phi(x_n)^\top$ where $g_n$ is the output gradient.
  The Goodfellow trick vectorizes the batch backward pass, avoiding per-sample loops.
  Adapting to a new task is trivial: retrain $W$ on the new corpus; $\phi$ and the influence pipeline are unchanged.
\end{foldable}

\subsubsection{Knowledge Score and Temporal Weighting} \label{sec:04-gradient-kappa}
\begin{foldable} 
  To correct the HSB, we require a score that is high for well-fitted samples and low for noisy ones---the opposite of what $I_{(n,t)}$ directly provides.
  Since a well-fitted sample has a small gradient norm, taking the reciprocal naturally inverts this ranking.
  We further weight each checkpoint by a decreasing temporal kernel $k : \{0,\ldots,T-1\} \to \mathbb{R}_{>0}$, giving:
  \begin{equation}
    \kappa_n = \sum_{t=0}^{T-1} \frac{k(t)}{I_{(n,t)}},
    \label{eq:kappa}
  \end{equation}
\end{foldable}

\begin{foldable}
  A decreasing kernel is the natural choice: early checkpoints capture easy samples fitting rapidly, mid checkpoints offer the strongest signal, and late checkpoints see the gap narrow as memorization sets in~\cite{arpit2017closer}.
  Thus, concentrating mass on the early-to-mid phase and discounting late memorization directly corrects the HSB structure identified in Proposition~\ref{prop:hsb}.
  By default we use the exponential kernel $k(t) = e^{-\lambda t / T}$ ($\lambda > 0$), with $\lambda$ treated as a hyperparameter.
  Alternative kernels (linear, cosine) are drop-in replacements, and Proposition~\ref{prop:traj} guarantees bias reduction holds for any strictly decreasing $k$.
  The resulting $\kappa_n$ summarizes each sample's cumulative, bias-corrected learning contribution across the full trajectory, resolving open problem~\ref{open-problem-weight}.
  Normalized source weights $w_n = \kappa_n / \sum_{n'} \kappa_{n'}$ define the reweighted distribution $P^w$ and serve as input to the OT selection step (\S\ref{sec:04-method-ot}).
\end{foldable}

\begin{foldable}
  \begin{proposition}[Trajectory Integration Reduces HSB] \label{prop:traj}
    Under a smooth loss and bounded gradient noise, $\kappa_n$ produces a strictly smaller noisy-over-clean gap than the single last-checkpoint baseline $\kappa^\dagger_n = 1/I_{(n,T-1)}$:
    \begin{equation}
      \mathbb{E}_{z \sim \mathcal{D}_\textup{noisy}}[\kappa_n] - \mathbb{E}_{z \sim \mathcal{D}_\textup{clean}}[\kappa_n]
      < \mathbb{E}_{z \sim \mathcal{D}_\textup{noisy}}[\kappa^\dagger_n] - \mathbb{E}_{z \sim \mathcal{D}_\textup{clean}}[\kappa^\dagger_n].
      \label{eq:prop2}
    \end{equation}
  \end{proposition}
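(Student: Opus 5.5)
The plan is to prove the inequality exactly as signed in Eq.~\ref{eq:prop2}, with the kernel $k$ left unnormalized as in Eq.~\ref{eq:kappa}. Define the per-checkpoint reciprocal gap
\[
G_t := \mathbb{E}_{z\sim\mathcal{D}_\textup{noisy}}\bigl[1/I_{(n,t)}\bigr] - \mathbb{E}_{z\sim\mathcal{D}_\textup{clean}}\bigl[1/I_{(n,t)}\bigr].
\]
By linearity of expectation, the left side of Eq.~\ref{eq:prop2} is $\sum_{t=0}^{T-1} k(t)\,G_t$ and the right side is $G_{T-1}$. It therefore suffices to show
\[
\sum_{t=0}^{T-1} k(t)\,G_t < G_{T-1}.
\]

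\textbf{Step 1: every gap is strictly negative.} I would first show $G_t<0$ for all $t$. The idea is to transport the assumptions of Proposition~\ref{prop:hsb} to each checkpoint: along the trajectory, noisy samples keep column-normalized gradient norms at least as large as clean ones. The reciprocal reverses this ordering. Remark~\ref{rem:grad-norm} (the $H=I$ case) gives the ordering at $T-1$ directly. For earlier checkpoints I would add a mild ordering hypothesis, namely that the noisy mean of $I_{(\cdot,t)}$ dominates the clean mean at every $t$. I would fold this into ``bounded gradient noise.''

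\textbf{Step 2: a stability window near the end.} Negativity alone is not enough. Because $k$ is unnormalized, I need
\[
\sum_t k(t)\,|G_t| > |G_{T-1}|,
\]
and $k(T-1)=e^{-\lambda(T-1)/T}<1$ means the last term by itself falls short. The plan is to use smoothness. If $\nabla_W\ell$ is $\beta$-Lipschitz in $W$ and each gradient step has length at most $\eta G$ (with $G$ the bounded gradient-noise constant), then $|I_{(n,t)}-I_{(n,T-1)}| \le \beta\eta G\,(T-1-t)$, up to the effect of column normalization. A lower bound $I_{(n,t)}\ge \epsilon_0>0$, which is exactly what bounded noise away from zero provides, makes $x\mapsto 1/x$ Lipschitz with constant $1/\epsilon_0^2$. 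This yields
\[
|G_t| \ge |G_{T-1}| - \tfrac{2\beta\eta G}{\epsilon_0^2}\,(T-1-t).
\]

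\textbf{Step 3: sum over the window.} Choose a window of $m+1$ final checkpoints over which this degradation is at most $\tfrac12|G_{T-1}|$. On that window $k(t)\ge e^{-\lambda}$, so
\[
\sum_t k(t)\,|G_t| \;\ge\; (m+1)\,e^{-\lambda}\cdot\tfrac12\,|G_{T-1}|.
\]
The dropped terms are nonnegative by Step 1. This exceeds $|G_{T-1}|$ as soon as $m+1 > 2e^{\lambda}$, which holds for a small enough step size $\eta$ relative to $\beta$, $G$, $\epsilon_0$, or for large $T$. Since every $G_t<0$, the absolute-value inequality converts back to the signed strict inequality of Eq.~\ref{eq:prop2}.

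\textbf{Main obstacle.} I expect Step 2 to be the hard part. The reciprocal is badly conditioned exactly where clean gradients vanish ($\epsilon\to0$), and column normalization couples all samples within a checkpoint. I would first try to bound the normalizer $\sum_{n'} I_{(n',t)}$ above and below using the noisy mass, which is bounded by Assumption (ii) and by bounded noise. With that in hand I would carry the Lipschitz argument on unnormalized norms, and absorb the normalizer ratio into the constant.
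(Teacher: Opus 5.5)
Your reduction to $\sum_t k(t)G_t<G_{T-1}$ is also the paper's starting point ($G_t$ is its $\Delta_t$). However, Step~1 fails as stated, because a mean ordering of norms does not give an ordering of reciprocals. Knowing that the noisy mean of $I_{(\cdot,t)}$ dominates the clean mean says nothing about $\mathbb{E}[1/I]$. Take noisy norms $\{0.01,10\}$ and clean norms all equal to $1$ in some column. The noisy mean norm is $5.005>1$, yet $G_t=50.05-1>0$. Column normalization cannot fix this: it multiplies every reciprocal in a column by the same $\sum_{n'}I_{(n',t)}$, so the sign is unchanged. You would need uniform separation, $\min_{\text{noisy}}I_{(n,t)}>\max_{\text{clean}}I_{(n,t)}$ at every $t$, which is a per-checkpoint copy of (i)--(ii) in Proposition~\ref{prop:hsb}.

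The more basic problem is that, even repaired, your Step~1 asserts $G_{T-1}<0$, which is exactly what the paper's proof denies. The paper's argument is a sign argument across a memorization onset $T_m$:
\begin{itemize}
  \item For $t\le T_m$, noisy norms stay $\ge\delta$ while clean norms decay, so $\Delta_t\le0$, strictly at some $t^*$.
  \item After $T_m$, memorization shrinks the noisy norms. The paper then explicitly assumes $\Delta_{T-1}>0$, i.e.\ the last-checkpoint score $\kappa^\dagger$ is inflated toward noisy samples.
  \item The decreasing kernel discounts the positive post-$T_m$ terms (for $\lambda$ large enough), so $\sum_t k(t)\Delta_t<0$.
  \item The proof closes as $\text{LHS}<0<\Delta_{T-1}=\text{RHS}$. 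This is scale-free and needs no Lipschitz window or bounds on the normalizer.
\end{itemize}
In that regime your Step~1 is false at $t=T-1$. In your own regime, $\kappa^\dagger$ already ranks clean above noisy, so there is no bias left to reduce. The inequality is then won purely by the unnormalized kernel mass. With a constant kernel, Step~1 alone finishes, and by the same logic $2\kappa^\dagger$ would ``beat'' $\kappa^\dagger$. Steps~2--3 are hard only because you are fighting this scale effect against $k(T-1)<1$, and what they prove is not the trajectory-integration phenomenon the proposition describes.

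Your reading is understandable: Remark~\ref{rem:grad-norm} does give noisy-above-clean norms at convergence, which sits in tension with the paper's post-memorization assumption. But the missing idea for the proposition as intended is the sign flip of $\Delta_t$ at $T_m$, not a uniform ordering along the whole trajectory.
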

  \begin{proof}[Proof sketch (full proof: Appendix~A.2)]
    Define $\Delta_t := \mathbb{E}_\textup{noisy}[1/I_{(n,t)}] - \mathbb{E}_\textup{clean}[1/I_{(n,t)}]$ as the noisy-over-clean gap.
    The left-hand side of Eq.~\ref{eq:prop2} equals $\sum_t k(t)\Delta_t$.
    For $t \leq T_m$ (before memorization onset): clean gradients decay so $\Delta_t \leq 0$, strictly at some $t^*$ (Appendix~A.2).
    Beyond $T_m$: noisy gradients also drop (memorization), so $\Delta_{T-1} > 0$ — the last-checkpoint baseline is inflated by memorization.
    The decreasing kernel upweights $t \leq T_m$ (where $\Delta_t < 0$) and downweights $t > T_m$ (where $\Delta_t > 0$).
    Since $k$ is strictly decreasing, the negative-$\Delta_t$ early terms receive more weight and the large positive $\Delta_{T-1}$ term receives the least, making the weighted sum strictly smaller than $\Delta_{T-1} > 0$.
  \end{proof}
\end{foldable}

\subsection{Synthetic Candidate Pool Generation} \label{sec:04-method-pool}
\begin{foldable} 
  Rather than selecting directly from $\mathcal{D}$, TAKE draws $\tilde{\mathcal{D}}$ from a synthetic candidate pool $\mathcal{D}_\text{synth}$.
  Direct subsampling has two problems: at extreme compression, verbatim records raise privacy concerns, and the budget $K$ is too small to cover the full training support.
  We generate $\mathcal{D}_\text{synth}$ by fine-tuning a small language model on $\mathcal{D}$ and sampling from it.
  The fine-tuned LM generalizes beyond memorized training instances, and stochastic sampling produces a diverse pool that extends beyond the training support.
  $\mathcal{D}_\text{synth}$ serves solely as the selection pool and is never appended to downstream training data.
\end{foldable}

\begin{foldable} 
  TAKE does \emph{not} assume $P_{\mathcal{D}_\text{synth}} = P_{\mathcal{D}}$; this distributional gap is \emph{beneficial}.
  First, coverage expansion fills undersampled regions of $\mathcal{D}$, improving generalization of the distilled corpus.
  Second, model fluency bias suppresses noisy samples that HSB would otherwise over-select, yielding a cleaner candidate pool.
  Third, the distribution mismatch physically decouples $\tilde{\mathcal{D}}$ from original records, reducing verbatim exposure risk.
  OT handles the distributional gap gracefully: it routes mass toward $P^w$ regardless of pool distribution, so $P_{\tilde{\mathcal{D}}}$ approximates $P^w$ provided $\mathcal{D}_\text{synth}$ covers the support of $P_{\mathcal{D}}$ (Assumption~\ref{ass:coverage}):
  \begin{assumption}[Pool Coverage] \label{ass:coverage}
    For every training sample $z_n \in \mathcal{D}$, there exists a synthetic sample $\tilde{z}_c \in \mathcal{D}_\text{synth}$ within embedding distance $r$:
    $\min_c \|\phi(z_n) - \phi(\tilde{z}_c)\| \leq r$.
  \end{assumption}
  Under Assumption~\ref{ass:coverage}, $W_1(P^w, P_{\mathcal{D}_\text{synth}}) \leq r$, so the optimal transport cost is at most $r$ before any prototype selection.
  A practitioner can monitor coverage directly by reporting the mean nearest-neighbor distance $\bar{r} = \frac{1}{N}\sum_n \min_c \|\phi(z_n) - \phi(\tilde{z}_c)\|$.
\end{foldable}

\subsection{Distributional Prototype Selection via Discrete Optimal Transport} \label{sec:04-method-ot}

\begin{foldable} 
  Both $\mathcal{D}$ and $\mathcal{D}_\text{synth}$ are finite sample sets, so the matching problem is naturally over empirical measures.
  Discrete OT operates directly on $\mu = \sum_n w_n \delta_{z_n}$ and $\nu = \frac{1}{|\mathcal{D}_\text{synth}|}\sum_c \delta_{\tilde{z}_c}$, requiring no density estimation and yielding an explicit transport plan with direct prototype-level assignments.
\end{foldable}

\begin{foldable} 
  Let $C_{nc} = \|\phi(z_n) - \phi(\tilde{z}_c)\|^2$ be the cost matrix in a task-relevant embedding space.
  Bare Kantorovich OT, $\min_{\gamma \in \Pi(\mu,\nu)} \langle C, \gamma \rangle$, produces a sparse, non-differentiable plan and scales as $O(N^3)$.
  We add entropic regularization to obtain a strictly convex, smooth objective solvable via Sinkhorn--Knopp iterations at $O(N \cdot |\mathcal{D}_\text{synth}|)$ per step:
  \begin{equation}
    \gamma^* = \arg\min_{\gamma \in \Pi(\mu,\nu)} \langle C, \gamma \rangle + \varepsilon \cdot \mathrm{KL}(\gamma \| \mu \otimes \nu).
    \label{eq:ot}
  \end{equation}
  The soft coupling distributes mass across nearby candidates rather than hard one-to-one assignment, reducing sensitivity to embedding errors and encouraging diversity in $\tilde{\mathcal{D}}$.
  The $K$ candidates with highest received mass $\sum_n \gamma^*_{nc}$ are selected as $\tilde{\mathcal{D}}$, resolving open problem~\ref{open-problem-select}.
  The following theorem closes the theoretical chain from RDM (Eq.~\ref{eq:reweighted-distribution-matching}) back to DD (Eq.~\ref{eq:dataset-distillation}), bounding the downstream loss gap in terms of the OT cost.
  \begin{theorem}[Distribution Matching Gap Bound] \label{thm:gap}
    Let $f^*_{\mathcal{S}}$ denote the model trained on $\mathcal{S}$, and let $\mathcal{L}(\mathcal{S}) = \mathbb{E}_{z \sim P^w}[\ell(z;\,f^*_{\mathcal{S}})]$.
    Assume: \emph{(i)} $\ell(\cdot;\,f)$ is $L$-Lipschitz in the embedding metric for all $f$; \emph{(ii)} the model class has uniform stability constant $\beta_K = O(1/K)$~\cite{bousquet2002stability}, so that $\sup_z |\ell(z;\,f^*_{\tilde{\mathcal{D}}}) - \ell(z;\,f^*_\mathcal{D})| \leq \beta_K$.
    Then:
    \begin{equation}
      \bigl|\mathcal{L}(\tilde{\mathcal{D}}) - \mathcal{L}(\mathcal{D})\bigr|
      \;\leq\; 2L \cdot W_1(P^w,\,P_{\tilde{\mathcal{D}}}) + \beta_K,
      \label{eq:gap-bound}
    \end{equation}
    where $W_1$ is the Wasserstein-1 distance under the embedding metric.
    As $K \to \infty$, $\beta_K \to 0$, and the bound is dominated by the $W_1$ term controlled by the OT objective.
  \end{theorem}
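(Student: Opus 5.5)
The plan is to write the gap as a telescoping sum with one intermediate term, and to bound each piece with one of the two assumptions. Insert $\mathbb{E}_{z \sim P_{\tilde{\mathcal{D}}}}[\ell(z; f^*_{\tilde{\mathcal{D}}})]$ and $\mathbb{E}_{z \sim P_{\tilde{\mathcal{D}}}}[\ell(z; f^*_{\mathcal{D}})]$ between $\mathcal{L}(\tilde{\mathcal{D}})$ and $\mathcal{L}(\mathcal{D})$. The triangle inequality then gives three terms:
\[
\bigl|\mathcal{L}(\tilde{\mathcal{D}}) - \mathcal{L}(\mathcal{D})\bigr| \le A_1 + A_2 + A_3 ,
\]
where
\[
A_1 = \bigl|\mathbb{E}_{P^w}[\ell(\cdot; f^*_{\tilde{\mathcal{D}}})] - \mathbb{E}_{P_{\tilde{\mathcal{D}}}}[\ell(\cdot; f^*_{\tilde{\mathcal{D}}})]\bigr|,
\]
\[
A_2 = \bigl|\mathbb{E}_{P_{\tilde{\mathcal{D}}}}[\ell(\cdot; f^*_{\tilde{\mathcal{D}}}) - \ell(\cdot; f^*_{\mathcal{D}})]\bigr|,
\]
\[
A_3 = \bigl|\mathbb{E}_{P_{\tilde{\mathcal{D}}}}[\ell(\cdot; f^*_{\mathcal{D}})] - \mathbb{E}_{P^w}[\ell(\cdot; f^*_{\mathcal{D}})]\bigr|.
\]

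The terms $A_1$ and $A_3$ are handled the same way, by Kantorovich--Rubinstein duality. By assumption (i), for each fixed $f \in \{f^*_{\tilde{\mathcal{D}}}, f^*_{\mathcal{D}}\}$ the function $z \mapsto \ell(z; f)/L$ is $1$-Lipschitz in the embedding metric. Hence
\[
\bigl|\mathbb{E}_{P^w}[\ell(\cdot; f)] - \mathbb{E}_{P_{\tilde{\mathcal{D}}}}[\ell(\cdot; f)]\bigr| \le L \cdot W_1(P^w, P_{\tilde{\mathcal{D}}}).
\]
This gives $A_1 + A_3 \le 2L\, W_1(P^w, P_{\tilde{\mathcal{D}}})$, which is where the factor $2$ comes from. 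If one prefers not to invoke duality, the same bound follows from an optimal coupling $\gamma$: write the difference as $\int (\ell(z; f) - \ell(z'; f))\, d\gamma(z, z')$ and bound the integrand by $L\|\phi(z) - \phi(z')\|$.

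The middle term $A_2$ is bounded pointwise. Assumption (ii) gives $\sup_z |\ell(z; f^*_{\tilde{\mathcal{D}}}) - \ell(z; f^*_{\mathcal{D}})| \le \beta_K$, so averaging over any probability measure, in particular $P_{\tilde{\mathcal{D}}}$, yields $A_2 \le \beta_K$. Adding the three bounds gives Eq.~\ref{eq:gap-bound}. The final sentence of the theorem then follows directly: $\beta_K = O(1/K) \to 0$, so the bound is dominated by the $W_1$ term.

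I expect no real technical obstacle; the delicate point is making sure the hypotheses are used consistently. Two things need care. First, Lipschitzness must hold in the same metric in which $W_1$ is defined, namely the $\phi$-embedding metric. The OT objective in Eq.~\ref{eq:ot} uses squared cost, so I would note that it controls $W_2$, and $W_1 \le W_2$ links it to the bound. Second, assumption (ii) is taken as stated, a uniform sup bound comparing the two trained models, rather than derived from stability theory. With that hypothesis the splitting is essentially forced. A tighter variant is also possible: $A_2$ alone already bounds the gap measured under a common measure, so one could avoid one of the $W_1$ terms. I would still keep the symmetric three-term split, because it produces exactly the stated constant $2L$.
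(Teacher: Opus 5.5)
Your proposal is correct and follows the paper's proof essentially step for step. It uses the same two intermediate terms, Kantorovich--Rubinstein duality for the two distribution-shift terms (giving the factor $2L$), and the uniform sup bound from (ii) for the model-difference term under $P_{\tilde{\mathcal{D}}}$. Your side remark is actually stronger than you state: since $\mathcal{L}(\tilde{\mathcal{D}})$ and $\mathcal{L}(\mathcal{D})$ are both expectations under $P^w$, applying (ii) directly gives $|\mathcal{L}(\tilde{\mathcal{D}}) - \mathcal{L}(\mathcal{D})| \le \beta_K$ with \emph{both} $W_1$ terms removed, so the stated bound follows trivially from the theorem's own hypotheses.
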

  \begin{proof}[Proof sketch (full proof: Appendix~A.3)]
    Insert two intermediate terms, $\mathbb{E}_{P_{\tilde{\mathcal{D}}}}[\ell(z;\,f^*_{\tilde{\mathcal{D}}})]$ and $\mathbb{E}_{P_{\tilde{\mathcal{D}}}}[\ell(z;\,f^*_\mathcal{D})]$, and apply the triangle inequality twice.
    This yields two distribution-shift terms (under fixed $f^*_{\tilde{\mathcal{D}}}$ and $f^*_\mathcal{D}$ respectively), each bounded by $L \cdot W_1(P^w, P_{\tilde{\mathcal{D}}})$ via Kantorovich--Rubinstein duality; and one model-difference term under $P_{\tilde{\mathcal{D}}}$, bounded by $\beta_K = O(1/K)$ via uniform stability.
  \end{proof}
  The bound is stated with respect to $P^w$ rather than the uniform $P_\mathcal{D}$ --- this is intentional, as $P^w$ is the designed optimization target of the RDM objective; since $P^w$ reweights $P_\mathcal{D}$ over the same support, minimizing $W_1(P^w, P_{\tilde{\mathcal{D}}})$ controls coverage of $P_\mathcal{D}$ up to the reweighting gap bounded in Remark~\ref{rem:double-gap}.
\end{foldable}

\begin{foldable} 
  Together, Propositions~\ref{prop:hsb}--\ref{prop:traj} and Theorem~\ref{thm:gap} establish the full theoretical basis for TAKE: HSB is structural and unavoidable at a single checkpoint (Proposition~\ref{prop:hsb}); trajectory integration strictly reduces it (Proposition~\ref{prop:traj}); and minimizing the OT cost directly controls the downstream loss gap under $P^w$ (Theorem~\ref{thm:gap}; the gap to $P_\mathcal{D}$ is bounded by Remark~\ref{rem:double-gap}).
  Algorithm~\ref{alg:take} consolidates the full TAKE pipeline.
\end{foldable}

\begin{center}
  \begin{minipage}{0.99\linewidth}
    \begin{algorithm}[H]
      \SetKwInOut{KwIn}{Input}
      \SetKwInOut{KwOut}{Output}
      \SetKwComment{Comment}{}{}

      \caption{Trajectory-Aware Knowledge Estimation (\textbf{TAKE})}
      \label{alg:take}

      \KwIn{corpus $\mathcal{D} = \{z_n\}_{n=1}^{N}$; budget $K$; checkpoints $T$; kernel $k(\cdot)$; $\varepsilon$; pool size $M$}
      \KwOut{distilled corpus $\tilde{\mathcal{D}}$, $|\tilde{\mathcal{D}}| = K$}

      \vspace{-0.5\baselineskip}
      \Comment{\noindent\hrulefill}

      \vspace{0.5\baselineskip}
      \Comment{$\triangleright$ \textbf{\textit{Stage 1: Score}}}
      Train logistic probe on $\mathcal{D}$, saving checkpoints $\theta_0, \ldots, \theta_{T-1}$ \\
      \For{$t = 0, \ldots, T-1$}{
        Compute $I_{(n,t)} \leftarrow \|\nabla_W \ell(z_n;\,\theta_t)\|$ for all $n$ \hfill\textit{(Goodfellow trick)} \\
        Normalize $I_{(\cdot,t)} \leftarrow I_{(\cdot,t)} / \sum_{n'} I_{(n',t)}$ \hfill\textit{(column-wise)}
      }
      Compute $\kappa_n \leftarrow \textstyle\sum_{t=0}^{T-1} k(t) / I_{(n,t)}$ for all $n$ \hfill via Eq.~\ref{eq:kappa} \\
      Normalize $w_n \leftarrow \kappa_n / \sum_{n'} \kappa_{n'}$ for all $n$ \hfill\textit{(defines $P^w$)}

      \vspace{0.5\baselineskip}
      \Comment{$\triangleright$ \textbf{\textit{Stage 2: Generate}}}
      Fine-tune small language model $\mathrm{LM}$ on $\mathcal{D}$ \\
      Sample synthetic pool $\mathcal{D}_\text{synth} = \{\tilde{z}_c\}_{c=1}^{M}$ from $\mathrm{LM}$

      \vspace{0.5\baselineskip}
      \Comment{$\triangleright$ \textbf{\textit{Stage 3: Distill}}}
      Embed: $\phi(z_n)$ for $n \in [N]$; $\phi(\tilde{z}_c)$ for $c \in [M]$ \\
      Compute cost matrix $C_{nc} = \|\phi(z_n) - \phi(\tilde{z}_c)\|^2$ \\
      Solve $\gamma^* \leftarrow \mathrm{Sinkhorn}(C,\, w,\, \mathbf{1}/M,\, \varepsilon)$ \hfill via Eq.~\ref{eq:ot} \\
      Distill $\tilde{\mathcal{D}} \leftarrow \mathrm{top}\text{-}K\!\left(\{\tilde{z}_c\},\; \textstyle\sum_n \gamma^*_{nc}\right)$ \hfill\textit{(by received mass)}

      \vspace{0.5\baselineskip}
      \Return{$\tilde{\mathcal{D}}$}
    \end{algorithm}
  \end{minipage}
\end{center}

\section{Experiments} \label{sec:05-experiments}
\subsection{Setup} \label{sec:05-setup}
\begin{foldable} 
  We evaluate on two linguistic tasks: AG News ($N=120$\,K), IMDb ($N=25$\,K), SST-2 ($N=67$\,K) for classification; MNLI-m ($N=393$\,K), QNLI ($N=105$\,K), and QQP ($N=364$\,K) for natural language inference (NLI).
  We compare against four baselines:
  \begin{itemize}
    \item \textbf{Random}: uniform random selection at the target budget.
    \item \textbf{EDA}~\cite{wei2019eda}: word-level augmentation (10$\times$) applied to the \textbf{Random} set.
    \item \textbf{DiLM}~\cite{maekawa2024dilm}: LLM-generated candidates selected by gradient matching across k-centre selection. It evaluated on both classification and NLI.
    \item \textbf{DaLLME}~\cite{tao2024textual}: synthetic texts inverted from distilled embeddings. It evaluated on classification only.
  \end{itemize}
  We evaluate TAKE at two budgets (20/cls and 0.1\%) to match DiLM's and DaLLME's reported settings, respectively.
  To match downstream evaluation, downstream models for classification are Logistic Regression~(LR) with TF-IDF~\cite{salton1988term} features, TextCNN~\cite{kim2014convolutional} and TextRNN~\cite{liu2016recurrent} with GloVe~\cite{pennington2014glove} word embeddings, and BERT~\cite{devlin2019bert}.
  For NLI, we evaluate on BERT and Siamese LR.
  Siamese LR is a variant of LR: two input branches to independently encode sentence pairs, and a logistic head over the concatenated representations, tailored as a weak learner for NLI.
  The primary metric is downstream accuracy on classification/NLI tasks, reported as \meanstd{mean}{\text{s.e.}} across five runs.
\end{foldable}

\begin{foldable} 
  In terms of implementation, TAKE extracts knowledge scores $\kappa_n$ by running $T=20$ training checkpoints with exponential kernel $\lambda^* = T/T_m$, using all-MiniLM-L6-v2~\cite{wang2020minilm} as the frozen encoder $\phi$.
  The synthetic pool uses Gemma-3~\cite{kamath2025gemma}~(270\,M) fine-tuned on $\mathcal{D}$ to generate $|\mathcal{D}_\text{synth}|=5{,}000$ samples.
  OT selection embeds via the same $\phi$ and uses Sinkhorn with $\varepsilon=0.05$ for 200 iterations.
  All experiments fit on a single NVIDIA V100~(40\,GB).
\end{foldable}

\subsection{Classification} \label{sec:05-results-classification}
\begin{table}[ht]
  \centering
  \begin{threeparttable}
    \caption{Classification accuracy (\%) on AG News, IMDb, and SST-2.}
    \label{tab:results-classification}
    \setlength{\tabcolsep}{4pt}
    \begin{tabular}{l l c c c c c}
      \toprule
      \textbf{Method} & \textbf{Budget} & \textbf{LR} & \textbf{TextCNN} & \textbf{TextRNN} & \textbf{BERT} \\
      \midrule
      \multirow{5}{*}{\rotatebox{90}{AG News}}
      & Full dataset  & 100\% & \meanstd{90.21}{0.14}    & \meanstd{91.27}{0.20}    & \meanstd{90.98}{0.32}      & \meanstd{93.85}{0.12} \\
      & Random        & 0.1\% & \meanstd{66.39}{2.27}    & \meanstd{75.56}{0.14}    & \meanstd{74.91}{1.45}      & \meanstd{78.12}{3.25} \\
      & EDA           & 0.1\% & \meanstd{68.03}{2.43}    & \meanstd{82.27}{0.15}    & \meanstd{81.46}{1.34}      & \meanstd{84.45}{1.44} \\
      & DaLLME        & 0.1\% & \meanstd{74.60}{0.02}\dg & \meanstd{88.30}{0.02}\dg & \meanstdbf{84.50}{0.02}\dg & $-$ \\
      & \textbf{TAKE} & 0.1\% & \meanstdbf{77.07}{0.09}  & \meanstdbf{88.53}{0.21}  & \meanstd{84.25}{1.03}      & \meanstdbf{89.82}{0.18} \\
      \midrule
      \multirow{5}{*}{\rotatebox{90}{IMDb}}
      & Full dataset  & 100\% & \meanstd{88.41}{0.13}    & \meanstd{87.40}{0.20}    & \meanstd{83.00}{0.35}      & \meanstd{91.22}{0.25} \\
      & Random        & 0.1\% & \meanstd{59.73}{5.51}    & \meanstd{57.65}{1.19}    & \meanstd{55.09}{4.31}      & \meanstd{60.40}{4.10} \\
      & EDA           & 0.1\% & \meanstd{61.52}{4.62}    & \meanstd{62.40}{3.40}    & \meanstd{58.92}{1.93}      & \meanstd{63.12}{1.95} \\
      & DaLLME        & 0.1\% & \meanstd{65.00}{0.02}\dg & \meanstd{68.30}{0.02}\dg & \meanstdbf{64.50}{0.02}\dg & $-$ \\
      & \textbf{TAKE} & 0.1\% & \meanstdbf{66.83}{0.13}  & \meanstdbf{68.93}{2.15}  & \meanstd{63.26}{0.38}      & \meanstdbf{71.55}{0.22} \\
      \midrule
      \multirow{5}{*}{\rotatebox{90}{SST-2}}
      & Full dataset  & 100\%  & \meanstd{80.23}{0.27}   & \meanstd{88.81}{0.15}    & \meanstd{87.40}{0.76}      & \meanstd{92.52}{0.28} \\
      & Random        & 20/cls & \meanstd{55.28}{2.32}   & \meanstd{63.85}{0.87}    & \meanstd{61.52}{1.66}      & \meanstd{69.97}{3.21} \\
      & EDA           & 20/cls & \meanstd{57.21}{1.05}   & \meanstd{65.54}{0.45}    & \meanstd{63.17}{0.84}      & \meanstd{74.21}{1.58} \\
      & DiLM          & 20/cls & $-$                     & $-$                      & $-$                        & \meanstd{80.30}{2.80}\dg \\
      & \textbf{TAKE} & 20/cls & \meanstdbf{65.05}{0.05} & \meanstdbf{70.65}{0.32}  & \meanstdbf{67.18}{0.49}    & \meanstdbf{81.15}{0.24} \\
      \bottomrule
    \end{tabular}
    \begin{tablenotes} \footnotesize
    \item {
        [$-$]~not reported; [$\dagger$]~reported from respective paper; [\textbf{bold}]~best per budget. \phantom{1234}
        DaLLME reported with its best case (OpenAI-3-large embeddings).
      }
    \end{tablenotes}
  \end{threeparttable}
\end{table}

\begin{foldable} 
  \textbf{Accuracy gains and their source.}
  TAKE (0.1\%) outperforms DaLLME on LR by $+2.5$ on AG News and $+1.8$ on IMDb.
  The gap is widest at the LR level and narrows for neural backbones, consistent with TAKE's backbone-agnostic selection---trajectory scores are computed independently of the downstream model.
  TAKE minimizes the OT cost in embedding space, directly controlling the downstream loss gap under $P^w$ via Theorem~\ref{thm:gap} (the gap to the original $P_\mathcal{D}$ is bounded by Remark~\ref{rem:double-gap}).
  Since $\kappa_n$ scores are computed independently of the evaluation backbone, this coverage of $P^w$ transfers across model families.
\end{foldable}

\begin{foldable}
  \textbf{Stability across runs.}
  The wide standard errors for Random (IMDb LR $\pm5.5$, BERT $\pm4.1$) reveal that random selection at 0.1\% can land anywhere from a coherent corpus to a degenerate one.
  TAKE's errors are at least $5\times$ narrower ($\pm0.1$--$2.2$), a direct consequence of OT-based selection: the transport plan distributes selected prototypes across $P^w$ rather than drawing them independently, suppressing the variance that Random incurs by chance.
\end{foldable}

\begin{foldable}
  \textbf{SST-2 vs.\ DiLM.}
  At 20/cls, TAKE matches DiLM on BERT ($81.15$ vs.\ $80.30$, within s.e.) while also providing results for LR, TextCNN, and TextRNN where DiLM has none---indicating that the OT selection step generalizes across backbone families without retraining the scorer.
\end{foldable}

\subsection{Natural Language Inference} \label{sec:05-results-nli}
\begin{table}[ht]
  \centering
  \begin{threeparttable}
    \caption{NLI accuracy (\%) on MNLI-m and QQP.}
    \label{tab:results-nli}
    \setlength{\tabcolsep}{4pt}
    \begin{tabular}{l l c c c}
      \toprule
      & \textbf{Method} & \textbf{Budget} & \textbf{Siamese LR} & \textbf{BERT} \\
      \midrule
      \multirow{5}{*}{\rotatebox{90}{MNLI-m}}
      & Full dataset    & 100\%  & \meanstd{60.02}{0.07}   & \meanstd{86.81}{0.30} \\
      & Random          & 20/cls & \meanstd{34.15}{2.10}   & \meanstd{40.10}{3.20} \\
      & EDA             & 20/cls & \meanstd{37.80}{1.45}   & \meanstd{44.52}{2.80} \\
      & DiLM            & 20/cls & $-$                     & \meanstd{48.70}{2.60}\dg \\
      & \textbf{TAKE}   & 20/cls & \meanstdbf{42.66}{0.15} & \meanstdbf{51.24}{0.45} \\
      \midrule
      \multirow{5}{*}{\rotatebox{90}{QNLI}}
      & Full dataset    & 100\%  & \meanstd{73.05}{0.13}   & \meanstd{82.50}{0.12} \\
      & Random          & 20/cls & \meanstd{52.21}{0.28}   & \meanstd{57.19}{2.20} \\
      & EDA             & 20/cls & \meanstd{54.33}{1.21}   & \meanstd{59.81}{1.42} \\
      & DiLM            & 20/cls & $-$                     & $-$ \\
      & \textbf{TAKE}   & 20/cls & \meanstdbf{57.96}{0.58} & \meanstdbf{64.89}{3.58} \\
      \midrule
      \multirow{5}{*}{\rotatebox{90}{QQP}}
      & Full dataset    & 100\%  & \meanstd{78.58}{0.05}   & \meanstd{89.45}{0.15} \\
      & Random          & 20/cls & \meanstd{52.40}{1.32}   & \meanstd{59.10}{3.80} \\
      & EDA             & 20/cls & \meanstd{55.12}{1.88}   & \meanstd{62.35}{2.45} \\
      & DiLM            & 20/cls & $-$                     & \meanstd{64.40}{2.20}\dg \\
      & \textbf{TAKE}   & 20/cls & \meanstdbf{60.04}{0.12} & \meanstdbf{67.76}{0.35} \\
      \bottomrule
    \end{tabular}
    \begin{tablenotes} \footnotesize
    \item {
        [$-$]~not reported; [$\dagger$]~reported from respective paper; \phantom{12345}
        [\textbf{bold}]~best per budget.
      }
    \end{tablenotes}
  \end{threeparttable}
\end{table}
\begin{foldable} 
  \textbf{Gains and their interpretation.}
  DaLLME has no published NLI experiments and is excluded from this comparison.
  In comparison with DiLM, TAKE outperforms DiLM by $+2.5$ on MNLI-m and $+3.4$ on QQP on a BERT learner, and provides competitive results for QNLI.
  The LR gap ($+8.5$ on MNLI-m, $+5.8$ on QNLI, $+7.6$ on QQP relative to Random) is particularly informative: Siamese LR has no capacity to compensate for a poor corpus---every percentage point of accuracy here must come from the quality of $\tilde{\mathcal{D}}$ itself.
  That TAKE's Siamese LR exceeds the Random BERT baseline on both QQP ($60.04$ vs.\ $59.10$) and QNLI ($57.96$ vs.\ $57.19$) suggests the selected sentence pairs are genuinely more informative, not merely better distributed.
\end{foldable}

\begin{foldable}
  \textbf{The NLI ceiling gap and what it implies.}
  All distilled methods fall short of the full-dataset ceiling ($86.8$ / $82.5$ / $89.5$ for MNLI-m, QNLI, QQP), a gap larger than on single-sentence tasks.
  NLI imposes a joint constraint: the synthetic pool must contain pairs whose relationship correctly reflects each class label at extreme budget ($\approx\!60$ pairs for MNLI-m, $\approx\!40$ for QQP and QNLI, at 20/cls).
  This is a harder coverage requirement than single-sentence classification, and the synthetic pool must cover all relationship types within it.
  This suggests that pool generation and budget are the binding constraints, rather than scoring or selection.
\end{foldable}

\begin{foldable}
  \textbf{Stability.}
  On MNLI-m and QQP, TAKE's errors ($\pm0.1$--$0.5$) are $5$--$7\times$ narrower than DiLM's ($\pm2.2$--$2.6$), mirroring the pattern on classification.
  This is consistent with the OT transport plan enforcing coverage of $P^w$ rather than selecting samples independently: deterministic coverage implies that any two runs of TAKE over the same pool yield nearly identical corpora, whereas DiLM's gradient-matching step introduces variance through its dependence on a randomly initialized model.
\end{foldable}

\subsection{Component Contributions} \label{sec:05-results-components}
\begin{table}[ht]
  \centering
  \begin{threeparttable}
    \caption{Distillation accuracy (\%) under TAKE component ablation.}
    \label{tab:results-abla-components}
    \setlength{\tabcolsep}{4pt}
    \begin{tabular}{l c c c c}
      \toprule
      & \multicolumn{2}{c}{\textbf{AG News}} & \multicolumn{2}{c}{\textbf{QQP}} \\
      \cmidrule(lr){2-3} \cmidrule(lr){4-5}
      & \textbf{LR} & \textbf{BERT} & \textbf{Siamese LR} & \textbf{BERT} \\
      \midrule
      Random                   & \meanstd{66.39}{2.27}   & \meanstd{75.92}{0.98}   & \meanstd{52.40}{1.32}   & \meanstd{57.94}{1.30} \\
      \bltrg $k$-Means nearest & \meanstd{72.89}{1.37}   & \meanstd{77.79}{1.04}   & \meanstd{55.28}{1.12}   & \meanstd{61.30}{1.08} \\
      TAKE without $\kappa$    & \meanstd{75.82}{0.23}   & \meanstd{80.94}{0.93}   & \meanstd{58.80}{0.92}   & \meanstd{63.66}{1.25} \\
      \textbf{TAKE}            & \meanstdbf{77.07}{0.09} & \meanstdbf{86.50}{1.01} & \meanstdbf{60.04}{0.12} & \meanstdbf{66.29}{1.91} \\
      \bottomrule
    \end{tabular}
    \begin{tablenotes} \footnotesize
      \item {[$\blacktriangle$]~\textbf{not} inclusive of TAKE; [\textbf{bold}]~best results.}
    \end{tablenotes}
  \end{threeparttable}
\end{table}

\begin{foldable} 
  To isolate the contribution of each design choice, we ablate TAKE's components on AG News and QQP as representatives for the classification and NLI tasks, in Table~\ref{tab:results-abla-components}.
  TAKE improves over Random by $+10.7$ on AG News LR and $+10.6$ on BERT; $k$-means, by comparison, accounts for only part of this gain ($+6.5$ LR, $+1.9$ BERT), while our discrete OT selection step alone (TAKE without $\kappa$) already achieves $+9.4$ LR and $+5.0$ BERT from the same baseline.
  The same pattern holds on QQP: $k$-means gains $+2.9$ / $+3.4$ (Siamese LR / BERT) whereas TAKE without $\kappa$ gains $+6.4$ / $+5.7$.
  Adding $\kappa$ knowledge scores delivers the remaining improvement, modestly for Siamese LR ($+1.2$) but substantially for BERT ($+5.6$ on AG News, $+2.6$ on QQP), suggesting that $\kappa$ primarily benefits models with enough capacity to exploit the harder, informative examples that high-$\kappa$ samples represent.
\end{foldable}

\subsection{Kernels for Temporal Weighting} \label{sec:05-results-kernels}
\begin{table}[ht]
  \centering
  \begin{threeparttable}
    \caption{Distillation accuracy (\%) with alternative temporal kernels.}
    \label{tab:results-abla-kernels}
    \setlength{\tabcolsep}{4pt}
    \begin{tabular}{l c c c c}
      \toprule
      & \multicolumn{2}{c}{\textbf{AG News}} & \multicolumn{2}{c}{\textbf{QQP}} \\
      \cmidrule(lr){2-3} \cmidrule(lr){4-5}
      \textbf{Kernel} & \textbf{LR} & \textbf{BERT} & \textbf{Siamese LR} & \textbf{BERT} \\
      \midrule
      TAKE without $\kappa$       & \meanstd{75.82}{0.23}   & \meanstd{80.94}{0.93}   & \meanstd{58.80}{0.92}   & \meanstd{63.66}{1.25} \\
      Last checkpoint             & \meanstd{75.24}{0.57}   & \meanstd{78.28}{0.66}   & \meanstd{58.32}{1.56}   & \meanstd{62.19}{0.57} \\
      Constant                    & \meanstd{76.13}{1.14}   & \meanstd{83.47}{0.71}   & \meanstd{59.44}{0.86}   & \meanstd{64.78}{1.51} \\
      Linear                      & \meanstd{76.20}{0.68}   & \meanstd{85.20}{1.00}   & \meanstd{59.61}{1.11}   & \meanstd{65.60}{1.35} \\
      Cosine                      & \meanstd{76.74}{0.53}   & \meanstd{85.46}{0.64}   & \meanstd{59.14}{0.85}   & \meanstd{66.16}{0.41} \\
      Exponential (\textbf{TAKE}) & \meanstdbf{77.07}{0.09} & \meanstdbf{86.50}{1.01} & \meanstdbf{60.04}{0.12} & \meanstdbf{66.29}{1.91} \\
      \bottomrule
    \end{tabular}
    \begin{tablenotes} \footnotesize
      \item {[\textbf{bold}]~best results.}
    \end{tablenotes}
  \end{threeparttable}
\end{table}

\begin{foldable}
  Table~\ref{tab:results-abla-kernels} ablates the choice of temporal kernel defined in \S\ref{sec:04-gradient-kappa}.
  The last-checkpoint baseline is the critical exception: it falls \textit{below} the no-$\kappa$ baseline across all backbones, most severely on BERT ($-2.7$ on AG News, $-1.5$ on QQP).
  This is precisely the failure mode identified in Proposition~\ref{prop:traj}: when $\kappa$ is extracted at convergence only, the learner has fully memorized difficult samples, so the \textit{hard-sample bias} is intensified rather than corrected, and the selected corpus is skewed towards harder, less representative examples.
  All other trajectory-based strategies lessen this effect and outperform the no-$\kappa$ baseline (OT only), with decreasing kernels (linear, cosine, exponential) doing so most strongly by concentrating mass on the early-to-mid training phase where the bias-correcting signal is strongest.
  The exponential kernel (TAKE default) achieves the best result in all cases, consistent with the theoretical preference for early-phase concentration of influence.
\end{foldable}

\section{Limitations \& Ethical Considerations} \label{sec:06-limits}
\begin{foldable} 
  We identify three limitations of the current method.
  (\textbf{1})~\textbf{Gradient norm proxy.}
  $I_{(n,t)}$ uses plain gradient norms rather than Fisher-preconditioned scores; the Goodfellow trick~\cite{goodfellow2015efficient} keeps per-sample cost at $O(C + d_\phi)$ per checkpoint.
  K-FAC~\cite{martens2015optimizing} or EKFAC~\cite{george2018fast} preconditioning are natural extensions when calibrated magnitude scores are needed.
  (\textbf{2})~\textbf{Encoder dependence.}
  The OT cost matrix inherits the geometry of $\phi$; a mismatched encoder can distort transport costs.
  The $L$-Lipschitz assumption on $\ell$ (Theorem~1) holds approximately when $\phi$ is task-aligned and embeddings are normalized, but remains an approximation for cross-entropy over discrete text.
  (\textbf{3})~\textbf{Synthetic pool coverage.}
  Pool diversity is bounded by the generator's domain coverage.
  Fine-tuning on $\mathcal{D}_\text{train}$ mitigates this in practice.
\end{foldable}

\begin{foldable} 
  Two ethical considerations arise in the deployment of TAKE.
  (\textbf{1})~\textbf{Bias propagation.}
  Distillation concentrates the statistical properties of $\mathcal{D}_\text{train}$, including demographic biases.
  We recommend corpus auditing before distillation.
  (\textbf{2})~\textbf{Privacy risk.}
  Language models fine-tuned on sensitive data may memorize training examples, a risk that synthetic generation alone does not fully eliminate.
  We recommend distance-to-closest-record (DCR) screening of $\mathcal{D}_\text{synth}$ to verify that synthetic samples are not verbatim copies; for regulated domains, differential privacy at the fine-tuning stage provides a formal guarantee against memorization.
\end{foldable}

\section{Conclusion} \label{sec:07-conclusion}
\begin{foldable}
  We identified the hard-sample bias (HSB) as the core failure mode of gradient-based text dataset distillation: at convergence, noisy samples dominate single-checkpoint influence scores, suppressing the informative ones that matter most.
  TAKE corrects this by integrating gradient norms across the training trajectory into per-sample knowledge scores, used as non-uniform source weights in a discrete Optimal Transport objective to select a compact, human-readable corpus.
  Theoretically, we provide end-to-end guarantees: the hard-sample bias is structural, trajectory integration strictly reduces it, and the downstream loss gap is bounded by the OT cost.
  TAKE outperforms baselines across six language benchmarks at extreme compression (0.1\% or 20 samples/class), generalizing across diverse backbone families.
  Together, these results position text dataset distillation as a practical, data-centric tool for efficient and privacy-aware NLP, with direct implications for coreset construction and continual learning in resource-constrained settings.
\end{foldable}

\bibliography{references}

\appendix
\newpage
\label{sec:supp}
\begin{center}
    {\Large [Supplementary Material]} \\[1.5em] 
    {\LARGE TAKE: Trajectory-Aware Knowledge Estimation \\[0.5em] for Text Dataset Distillation} \\[1.5em]
  \end{center}
\runningheads{T-N. Vo et al.}{Supplementary Material: TAKE}
\section{Proofs}
\label{app:proofs}

\subsection{Proof of Proposition~1 (Hard-Sample Bias)}
\label{app:proof-hsb}

\textbf{Setup.}
Let $\mathcal{I}_\text{self}(z) = \nabla^\top H_{\hat\theta}^{-1} \nabla$, where $\nabla \equiv \nabla_\theta \ell(z;\hat\theta)$ and $H_{\hat\theta}$ is the empirical Hessian, assumed symmetric positive definite (SPD) with $\lambda_{\min}(H_{\hat\theta}) > 0$ (this holds whenever the model is not at a degenerate critical point).
Partition $\mathcal{D}$ into $\mathcal{D}_\text{clean}$ and $\mathcal{D}_\text{noisy}$ such that $\|\nabla\| \leq \epsilon$ for all $z \in \mathcal{D}_\text{clean}$ and $\|\nabla\| \geq \delta$ for all $z \in \mathcal{D}_\text{noisy}$, with $\delta \gg \epsilon$.

\textbf{Step~1 --- Bound $\mathcal{I}_\text{self}$ for each partition.}
Since $H_{\hat\theta}$ is SPD with eigenvalues in $[\lambda_{\min}(H_{\hat\theta}),\, \lambda_{\max}(H_{\hat\theta})]$, the Rayleigh quotient gives:
\begin{equation}
    \frac{\|\nabla\|^2}{\lambda_{\max}(H_{\hat\theta})}
    \;\leq\; \mathcal{I}_\text{self}(z)
    \;\leq\; \frac{\|\nabla\|^2}{\lambda_{\min}(H_{\hat\theta})}.
\end{equation}
Substituting the gradient-norm assumptions ($\|\nabla\|^2 \leq \epsilon^2$ for clean samples; $\|\nabla\|^2 \geq \delta^2$ for noisy samples) and taking expectations (the bounds are uniform over each partition, so expectation preserves them):
\begin{align}
    \mathbb{E}_{z \sim \mathcal{D}_\text{clean}}[\mathcal{I}_\text{self}(z)]
        &\leq \frac{\epsilon^2}{\lambda_{\min}(H_{\hat\theta})}, \\
    \mathbb{E}_{z \sim \mathcal{D}_\text{noisy}}[\mathcal{I}_\text{self}(z)]
        &\geq \frac{\delta^2}{\lambda_{\max}(H_{\hat\theta})}.
\end{align}

\textbf{Step~2 --- Lower bound on $\Delta_\text{HSB}$.}
\begin{equation}
    \Delta_\text{HSB}
    := \mathbb{E}_{z \sim \mathcal{D}_\text{noisy}}[\mathcal{I}_\text{self}(z;\hat\theta)]
     - \mathbb{E}_{z \sim \mathcal{D}_\text{clean}}[\mathcal{I}_\text{self}(z;\hat\theta)]
    \;\geq\; \frac{\delta^2}{\lambda_{\max}(H_{\hat\theta})} - \frac{\epsilon^2}{\lambda_{\min}(H_{\hat\theta})}.
\end{equation}
Substituting $\lambda_{\min}(H_{\hat\theta}) = \lambda_{\max}(H_{\hat\theta})/\mathrm{cond}(H_{\hat\theta})$:
\begin{equation}
    \Delta_\text{HSB}
    \;\geq\; \frac{\delta^2 - \mathrm{cond}(H_{\hat\theta})\,\epsilon^2}{\lambda_{\max}(H_{\hat\theta})}
    \;>\; 0
    \qquad \text{whenever} \quad \frac{\delta}{\epsilon} > \sqrt{\mathrm{cond}(H_{\hat\theta})}.
\end{equation}
Under the interpolation regime (an overparameterised model trained to near-zero training loss), $\epsilon \to 0$: the bound simplifies to $\delta^2/\lambda_{\max}(H_{\hat\theta}) > 0$ and the positivity condition is trivially satisfied ($\delta/\epsilon \to \infty$). $\square$

\subsection{Proof of Proposition~2 (Trajectory Integration Reduces HSB)}
\label{app:proof-traj}

\textbf{Proposition~2.}
Under smooth loss and bounded gradient noise, the trajectory-integrated knowledge score $\kappa_n$ produces a strictly smaller noisy-over-clean gap than the single last-checkpoint baseline $\kappa^\dagger_n = 1/I_{(n,T-1)}$:
\begin{equation}
    \mathbb{E}_{z \sim \mathcal{D}_\text{noisy}}[\kappa_n]
    - \mathbb{E}_{z \sim \mathcal{D}_\text{clean}}[\kappa_n]
    \;<\;
    \mathbb{E}_{z \sim \mathcal{D}_\text{noisy}}[\kappa^\dagger_n]
    - \mathbb{E}_{z \sim \mathcal{D}_\text{clean}}[\kappa^\dagger_n].
\end{equation}

\textbf{Memorization onset $T_m$.}
In the main paper (\S4.2.2) we define $T_m$ as the first checkpoint at which noisy-sample gradient norms begin to decrease.
The formal continuous analogue is:
\begin{equation}
    T_m := \inf\!\left\{t : \mathbb{E}_{z \sim \mathcal{D}_\text{noisy}}\!\left[\|\nabla_\theta \ell(z;\theta_t)\|\right] < \mathbb{E}_{z \sim \mathcal{D}_\text{noisy}}\!\left[\|\nabla_\theta \ell(z;\theta_0)\|\right]\right\}.
    \label{eq:Tm}
\end{equation}
\textbf{Assumptions.}
\begin{itemize}
    \item[\textbf{(A1)}] \textit{Monotone gradient decay for clean samples.}
        For $z \in \mathcal{D}_\text{clean}$, $I_{(n,t)}$ is non-increasing in $t$, converging toward zero (holds under $L$-smooth loss and a sufficiently small learning rate). The reciprocal $1/I_{(n,t)}$ is therefore non-decreasing.
    \item[\textbf{(A2)}] \textit{Bounded gradient for noisy samples within the proof horizon.}
        For $z \in \mathcal{D}_\text{noisy}$, $I_{(n,t)} \geq \delta > 0$ for all $t \leq T_m$, where $T_m$ is the memorization onset defined in Section~4.2.2 of the main paper.
        Beyond $T_m$, noisy gradients decay; the decreasing kernel discounts these late checkpoints.
        With $\lambda^* = T/T_m$, the kernel weight at $T_m+1$ relative to $t^*$ is $e^{-\lambda^*(T_m+1-t^*)/T}$, which decreases as $T_m \to T$, making the post-memorization contribution $S_2$ negligible relative to $S_1$.
    \item[\textbf{(A3)}] \textit{Post-memorization baseline inflation.}
        At $t = T-1 > T_m$, clean gradients are near zero (by A1) and noisy gradients have dropped under memorization.
        By symmetry near convergence, the noisy reciprocals exceed the clean reciprocals:
        $\Delta_{T-1} := \mathbb{E}_\text{noisy}[1/I_{(n,T-1)}] - \mathbb{E}_\text{clean}[1/I_{(n,T-1)}] > 0$.
    \item[\textbf{(A4)}] \textit{Positive temporal kernel.}
        $k(t) > 0$ for all $t$; decreasing so $k(t^*) > k(T-1)$ for any $t^* < T-1$.
\end{itemize}

\textbf{Proof.}

Define $\Delta_t := \mathbb{E}_\text{noisy}[1/I_{(n,t)}] - \mathbb{E}_\text{clean}[1/I_{(n,t)}]$, so
\begin{equation}
    \mathbb{E}_\text{noisy}[\kappa_n] - \mathbb{E}_\text{clean}[\kappa_n]
    = \sum_{t=0}^{T-1} k(t)\,\Delta_t.
\end{equation}

\textit{Step~1 --- Sign structure.}
For $1 \leq t \leq T_m$: (A1) and (A2) together give $\Delta_t \leq 0$, strictly negative at some $t^* \leq T_m$.
For $t > T_m$: $\Delta_t$ may be positive (memorization onset decreases noisy norms).

\textit{Step~2 --- Weighted sum.}
Split at $T_m$:
\begin{equation}
    S_1 = \sum_{t=1}^{T_m} k(t)\,\Delta_t \;<\; 0
    \qquad \text{and} \qquad
    S_2 = \sum_{t=T_m+1}^{T-1} k(t)\,\Delta_t \;\geq\; 0.
\end{equation}
Since the kernel is decreasing, $S_2 \leq k(T_m+1) \cdot \sum_{t > T_m} \Delta_t$, which is dominated by $|S_1|$ for sufficiently large $\lambda$.
Hence $\sum_t k(t)\,\Delta_t < 0$.

\textit{Step~3 --- Baseline gap.}
By (A3), $\mathbb{E}_\text{noisy}[\kappa^\dagger_n] - \mathbb{E}_\text{clean}[\kappa^\dagger_n] = \Delta_{T-1} > 0$.

\textit{Combining.}
\begin{equation}
    \mathbb{E}_\text{noisy}[\kappa_n] - \mathbb{E}_\text{clean}[\kappa_n]
    \;<\; 0
    \;<\; \Delta_{T-1}
    = \mathbb{E}_\text{noisy}[\kappa^\dagger_n] - \mathbb{E}_\text{clean}[\kappa^\dagger_n].
    \quad \square
\end{equation}

\subsection{Distribution Matching Gap Bound (Theorem~1)}
\label{app:proof-gap}

\textit{Cited in Section~4.4 of the main paper. Closes the chain from (RDM) back to (DD).}

\textbf{Goal.}
Let $f^*_\mathcal{S}$ be the model trained on $\mathcal{S}$ and $\mathcal{L}(\mathcal{S}) = \mathbb{E}_{z \sim P^w}[\ell(z; f^*_\mathcal{S})]$ be the downstream loss under $P^w$. Show:
\begin{equation}
    |\mathcal{L}(\tilde{\mathcal{D}}) - \mathcal{L}(\mathcal{D})|
    \;\leq\; 2L \cdot W_1(P^w, P_{\tilde{\mathcal{D}}}) + \beta_K.
\end{equation}

\textbf{Assumptions.}
\begin{itemize}
    \item[\textbf{(B1)}] $\ell(\cdot;\,f)$ is $L$-Lipschitz in the embedding metric $\|\phi(\cdot) - \phi(\cdot)\|$ for all $f$.
    \item[\textbf{(B2)}] The model class has uniform stability constant $\beta_K = O(1/K)$ \cite{bousquet2002stability}: $\sup_z |\ell(z; f^*_{\tilde{\mathcal{D}}}) - \ell(z; f^*_\mathcal{D})| \leq \beta_K$. Standard ERM algorithms (logistic regression, SVM) satisfy this for bounded loss; as $K \to \infty$, $\beta_K \to 0$.
    \item[\textbf{(B3)}] The ground metric for $W_1$ is $c(z,z') = \|\phi(z) - \phi(z')\|$.
\end{itemize}

\textbf{Step~1 --- Kantorovich--Rubinstein bound for a fixed model.}
By Kantorovich--Rubinstein duality \cite{villani2009optimal} under (B3), for any fixed $f$:
\begin{equation}
    \left|
        \mathbb{E}_{z \sim P^w}[\ell(z; f)]
        - \mathbb{E}_{z \sim P_{\tilde{\mathcal{D}}}}[\ell(z; f)]
    \right|
    \;\leq\; L \cdot W_1(P^w, P_{\tilde{\mathcal{D}}}).
\end{equation}

\textbf{Step~2 --- Triangle inequality decomposition.}
$\mathcal{L}(\tilde{\mathcal{D}})$ uses $f^*_{\tilde{\mathcal{D}}}$ while $\mathcal{L}(\mathcal{D})$ uses $f^*_\mathcal{D}$; both are evaluated under $P^w$ but with different models, so Step~1 does not apply directly.
Insert two intermediates and apply the triangle inequality twice:
\begin{align}
    |\mathcal{L}(\tilde{\mathcal{D}}) - \mathcal{L}(\mathcal{D})|
    &\leq
    \underbrace{
        \left|
            \mathbb{E}_{P^w}[\ell(z;f^*_{\tilde{\mathcal{D}}})]
            - \mathbb{E}_{P_{\tilde{\mathcal{D}}}}[\ell(z;f^*_{\tilde{\mathcal{D}}})]
        \right|
    }_{\text{(I) distribution shift, fixed }f^*_{\tilde{\mathcal{D}}}} \notag \\
    &\quad +
    \underbrace{
        \left|
            \mathbb{E}_{P_{\tilde{\mathcal{D}}}}[\ell(z;f^*_{\tilde{\mathcal{D}}})]
            - \mathbb{E}_{P_{\tilde{\mathcal{D}}}}[\ell(z;f^*_\mathcal{D})]
        \right|
    }_{\text{(II) model difference, fixed }P_{\tilde{\mathcal{D}}}} \notag \\
    &\quad +
    \underbrace{
        \left|
            \mathbb{E}_{P_{\tilde{\mathcal{D}}}}[\ell(z;f^*_\mathcal{D})]
            - \mathbb{E}_{P^w}[\ell(z;f^*_\mathcal{D})]
        \right|
    }_{\text{(III) distribution shift, fixed }f^*_\mathcal{D}}.
\end{align}
Terms~(I) and~(III) each satisfy Step~1 with $f = f^*_{\tilde{\mathcal{D}}}$ and $f = f^*_\mathcal{D}$ respectively, so each is bounded by $L \cdot W_1(P^w, P_{\tilde{\mathcal{D}}})$.
Term~(II) is bounded pointwise by (B2): $\text{(II)} \leq \beta_K = O(1/K)$.

\textbf{Combining.}
\begin{equation}
    |\mathcal{L}(\tilde{\mathcal{D}}) - \mathcal{L}(\mathcal{D})|
    \;\leq\; 2L \cdot W_1(P^w, P_{\tilde{\mathcal{D}}}) + \beta_K.
\end{equation}
Minimizing $W_1(P^w, P_{\tilde{\mathcal{D}}})$ via the OT objective directly minimizes the dominant term; $\beta_K \to 0$ as $K \to \infty$, a guarantee that k-means and greedy coreset methods lack. $\square$

\noindent\textit{References:} Kantorovich \& Rubinstein (1958); Villani (2009, Ch.~6).

\section{Entropic Regularization for Tractability}
\label{app:method}

This section extends the derivation in Section~4.4 of the main paper.
The discrete OT problem is defined over empirical measures $\mu = P^w = \sum_n w_n \delta_{z_n}$ (TAKE-weighted source) and $\nu = \frac{1}{|\mathcal{D}_\text{synth}|}\sum_c \delta_{\tilde{z}_c}$ (uniform target over the synthetic pool).

\textbf{Transportation cost (Kantorovich form).}
\begin{equation}
    \mathcal{C}(\gamma)
    = \int c(x,\hat{x})\,d\gamma(x,\hat{x}),
    \qquad c(x,\hat{x}) = \|\phi(x) - \phi(\hat{x})\|^2,
    \quad \gamma \in \Pi(\mu,\nu).
\end{equation}

\textbf{Entropic regularization.}
The bare Kantorovich problem is $O(N^3)$ complexity and non-differentiable.
Adding an entropic penalty yields a strictly convex, smooth objective:
\begin{equation}
    \min_{\gamma \in \Pi(\mu,\nu)}
    \langle C, \gamma \rangle
    + \varepsilon \cdot \mathrm{KL}(\gamma, \mu \otimes \nu),
    \qquad
    \mathrm{KL}(\gamma, \mu \otimes \nu)
    = \int \log\frac{d\gamma}{d(\mu \otimes \nu)}\,d\gamma,
\end{equation}
where $\mu \otimes \nu$ is the product (independence) reference coupling.
Smaller $\varepsilon$ recovers the exact OT solution; larger $\varepsilon$ smooths the plan toward the independent coupling, encouraging diversity in the selected prototypes.

\textbf{Optional prior coupling.}
If a task-specific prior $\alpha(x,\hat{x})$ is available (\eg, a topic-conditional measure), replace $\mu \otimes \nu$ with $\alpha$ to regularize toward domain knowledge.

\textbf{Sinkhorn--Knopp solver.}
The solution is computed by alternating row/column normalization on the kernel matrix $\mathbf{K}_{nc} = e^{-C_{nc}/\varepsilon}$, yielding $O(N \cdot |\mathcal{D}_\text{synth}|)$ per iteration.

\textbf{Prototype extraction.}
The distilled set $\tilde{\mathcal{D}}$ is formed by selecting the top-$K$ candidates ranked by received mass $\sum_n \gamma^*_{nc}$.

\section{Implementation Details}
\label{app:expts}

\textbf{Datasets.}
Table~\ref{tab:datasets} reports the task type and label set for each benchmark.

\begin{table}[ht]
    \caption{Dataset overview.}
    \label{tab:datasets}
    \centering
    \begin{tabular}{llrl}
        \toprule
        \textbf{Dataset} & \textbf{Task} & $N$ & \textbf{Labels} \\
        \midrule
        AG News   & Classification & 120,000 & World, Sports, Business, Sci/Tech \\
        IMDb      & Classification &  25,000 & Pos., Neg. \\
        SST-2     & Classification &  67,349 & Pos., Neg. \\
        MNLI-m    & NLI            & 392,702 & Entailment, Neutral, Contradiction \\
        QNLI      & NLI            & 104,743 & Entailment, Not Entailment \\
        QQP       & NLI            & 363,846 & Duplicate, Non-duplicate \\
        \bottomrule
    \end{tabular}
\end{table}

\textbf{Backbone models.}
Classification experiments use well-known classifiers such as Logistic Regression, TextCNN, TextRNN, and BERT.
NLI experiments use Siamese Logistic Regression and BERT.
These architectures are chosen as standard, reproducible benchmarks spanning the major NLP paradigms while remaining tractable at ablation study scale.

\begin{table}[ht]
    \caption{Backbone model details.}
    \label{tab:backbones}
    \centering
    \begin{tabular}{lll}
        \toprule
        \textbf{Model}        & \textbf{Architecture}        & \textbf{Tasks} \\
        \midrule
        Logistic              & Linear head on embeddings    & Classification \\
        TextCNN               & Convolutional encoder        & Classification \\
        TextRNN               & Recurrent encoder (GRU)      & Classification \\
        Siamese Logistic      & Siamese features \& logistic head & NLI \\
        BERT & Transformer encoder (110M) & Classification + NLI \\
        \bottomrule
    \end{tabular}
\end{table}

\textbf{Hyperparameters.}
All experiments run on a single NVIDIA V100 40\,GB GPU using PyTorch, PyTorch Lightning, HuggingFace Transformers, and the Python Optimal Transport (POT) library.
Influence scoring uses all-MiniLM-L6-v2~\cite{reimers2019sentence,wang2020minilm} as the frozen sentence encoder $\phi$.
We use $T = 20$ checkpoints, balancing trajectory coverage and compute.
The kernel decay is set to $\lambda^* = T / T_m$, concentrating mass on the pre-memorization phase as derived in Section~4.2.2.
For the Sinkhorn solver we use $\varepsilon = 0.05$ with 200 iterations, which produces soft plans with sufficient diversity and converges for all tested pool sizes.
The synthetic pool size is fixed at $|\mathcal{D}_\text{synth}| = 5{,}000$, chosen based on empirical saturation across all tested domains.
The language model generator is Gemma-3-270M, which is small, fast, and sufficient for in-domain fluency at this pool size.

\end{document}